\theoremstyle{plain}
\newtheorem{theorem}{Theorem}
\newtheorem{proposition}{Proposition}
\newtheorem{lemma}{Lemma}
\newtheorem{corollary}{Corollary}
\theoremstyle{definition}
\newtheorem{definition}{Definition}
\theoremstyle{remark}
\crefname{theorem}{theorem}{theorems}
\Crefname{theorem}{Theorem}{Theorems}
\crefname{proposition}{proposition}{propositions}
\Crefname{proposition}{Proposition}{Propositions}
\crefname{lemma}{lemma}{lemmas}
\Crefname{lemma}{Lemma}{Lemmas}
\crefname{corollary}{corollary}{corollaries}
\Crefname{corollary}{Corollary}{Corollaries}
\crefname{definition}{definition}{definitions}
\Crefname{definition}{Definition}{Definitions}
\crefname{assumption}{assumption}{assumptions}
\Crefname{assumption}{Assumption}{Assumptions}
\crefname{example}{example}{examples}
\Crefname{example}{Example}{Examples}
\crefname{remark}{remark}{remarks}
\Crefname{remark}{Remark}{Remarks}
\newcommand{\R}{\mathbb{R}}
\newcommand{\E}{\mathbb{E}}
\title{Microeconomic Foundations of Multi-Agent Learning}
\author{Nassim Helou}
\date{Harrisburg University, Harrisburg, USA \\
Thatch, New York, USA \\
January 2026}
\begin{document}

\maketitle

\begin{abstract}
Modern AI systems increasingly operate inside markets and institutions where data, behavior, and incentives are endogenous. This paper develops an economic foundation for multi-agent learning by studying a principal--agent interaction in a Markov decision process with strategic externalities, where both the principal and the agent learn over time. We propose a two-phase incentive mechanism that first estimates implementable transfers and then uses them to steer long-run dynamics; under mild regret-based rationality and exploration conditions, the mechanism achieves sublinear social-welfare regret and thus asymptotically optimal welfare. Simulations illustrate how even coarse incentives can correct inefficient learning under stateful externalities, highlighting the necessity of incentive-aware design for safe and welfare-aligned AI in markets and insurance.
\end{abstract}

\section{Introduction}

Artificial intelligence is no longer a technology acting in isolation, but an economic force embedded inside markets, institutions, and large--scale systems. Modern AI systems—large foundation models, multi-agent simulators, autonomous decision-makers, data markets, and algorithmic insurers -- operate in environments filled with strategic actors whose objectives shape the data and information flows on which AI relies. With the increasing deployment of such systems, one may wander how the interactions of such systems can be made oriented towards greater social welfare. Thus, a central challenge emerges: are there ideas and concepts from economic theory that could be employed in order to improve such systems and ? Typical and important questions are: how should AI reason about incentives, how should economic mechanisms shape the behavior of learning agents, are there insights from game theory that may explain some learning and decision--making behaviors? Understanding this interface is essential for ensuring that AI systems behave safely and align their target \emph{policy} with social welfare. All this will also help to understand if deployed algorithms are robust to strategic behaviors and collusion.
\newline
\newline
Insurance automated markets are particularly impacted because they sit at the intersection of prediction, incentives, and strategic behavior, all of which are fundamentally altered once learning algorithms become central decision-makers. Premiums, deductibles, coverage limits, and exclusions all act as incentives that influence reporting, risk-taking, and even underlying risk itself. Hence markets and insurance are very impacted by all the aspects mentioned above. However, the literature in \textit{Machine Learning} is not totally developed in this direction, hence our point in this paper. Traditional insurance is built on statistical estimation, risk pooling and contract design. But as AI becomes the key agent mediating risk predictions, dynamic pricing, and fraud detection, these tasks evolve into multi-agent learning problems with strategic participants: customers learn how to respond to pricing signals, large models learn risk distributions from data influenced by behavior, and insurers must design incentive structures that maintain truthful reporting or appropriate risk. 
% The confidence coverage (analogous to confidence intervals in traditional statistics) provided by such algorithms is another important consideration, since a practically useful tool requires this coverage to contain the true value while remaining sufficiently narrow.
Emerging AI--driven marketplaces (for instance automated markets, autonomous logistics networks, online advertising auctions) exhibit the same structural challenges: economic externalities, strategic information revelation, and misaligned exploration incentives. Markets in which AI systems interact with humans and with each other are, fundamentally, \emph{games of learning agents}.
\newline
\newline
This transformation exposes a profound theoretical gap. Classical machine learning assumes that data is exogenous and agent behavior is myopic. Classical economics assumes known environments and fully rational optimization. But in the settings above, neither assumption holds: agents learn, adapt, explore, and manipulate each other in an unknown environment. Leveraging the vocabulary from mechanism design, we can now call the platform (insurer, regulator, etc) the \emph{principal} and the other players in interaction the \emph{agents}. Since utility functions or the environment are unknown, the principal must learn and make decisions simultaneously. At the same time, the environment evolves as a consequence of these learning processes. The result is a new regime of uncertain and strategic environment \citep{rothschild2025agentic, immorlica2024generative} made of interacting learners. Tools from reinforcement learning \citep{kaelbling1996reinforcement, sutton1999reinforcement}, contract theory, game theory, and mechanism design must be fused at a core level to provide valuable insights.
\newline
\newline
Motivated by the scarcity of work at the intersection of industry actors (e.g., insurers, online platforms) and academia—which tends to focus either on classical game theory or on more pure ML-oriented research—we aim to formulate several core questions and provide initial algorithmic insights. This paper contributes to this agenda by developing a unified framework for studying incentive--compatible learning in multi--agent systems, grounded in the economic theory of contracts and externalities and in modern tools from online learning and statistics. We start from the observation that AI systems increasingly function as principals that must elicit information and effort from human or artificial agents whose internal objectives, learning dynamics, and types are unknown. As shown in the literature on contract design \citep{guesnerie1984complete, bolton2004contract, kHoszegi2014behavioral} and delegated learning \citep{saig2023delegated}, incentives \citep[see the very extensive book, ][]{laffont2002theory} shape statistical performance and exploration behavior in essential ways. For instance, when data collection is delegated to learning agents, the principal must account for both hidden states and hidden actions, designing transfer schemes that remain robust despite noisy evaluation \citep{ananthakrishnan2024delegating}. When agents face costly exploration, standard RL algorithms violate incentive compatibility, requiring information--design mechanisms to ensure proper exploration \citep{simchowitz2024exploration}. Likewise, externalities, moral hazard, and strategic manipulation appear in repeated bandit and MDP settings \citep{scheid2024learning}, emphasizing how classical economic forces reemerge in learning environments.
\newline
\newline
Our work builds on these insights and pushes them into a genuinely dynamic and stateful setting. As formalized in this paper, we consider a Markov decision process (MDP) \citep{bellman1957markovian, puterman1990markov} in which both the principal and the agents are learning over time, and where the agent’s actions influence not only their own returns but also the principal’s reward and the transition dynamics of the system. This environment captures essential features of AI in the context of data--powered markets and insurance systems: feedback loops between predictions and behavior, exploration that may impose externalities, and incomplete information about agent preferences. In the context of theoretical works in the field of statistics, feedback loops where a predictor influences the system from which it learns is increasingly studied as \emph{performative prediction}, as in \citet{perdomo2020performative, mendler2020stochastic} or \citet{brown2022performative}. In such a system, classical efficiency theorems break down unless the principal can infer the agent’s learning dynamics and design transfers that internalize externalities. We show that, despite these challenges, a carefully constructed two--phase mechanism yields asymptotically optimal social welfare: the principal can first learn how to influence the agent and then use this influence to steer long--run favorable outcomes.
\newline
\newline
Crucially, we extend these ideas beyond contract--based systems. Recent works reveals that the generative modeling techniques from diffusion models can be interpreted as economic aggregation mechanisms, implementing welfare --maximizing estimators and equilibrium decision rules. We formalize this connection and show that the denoising step of diffusion models corresponds to the unique solution of a social planner problem, and can be implemented as an equilibrium in a large--agent economy. This offers a surprising connection between economic theory and state of the art generative AI: diffusion models perform a form of efficient market aggregation. When integrated with principal--agent learning, this provides whole new ideas for designing collaborative AI systems in economic terms and mapping them to generative models.
\newline
\newline
Putting these elements together, this paper argues for a future in which AI systems behave as economic institutions—mediators of incentives, coordinators of decentralized learners. Questions then arise about how such systems can be oriented towards welfare optimization in environments shaped by strategic feedback. Nowhere is this more relevant than in insurance, where AI is used to evaluate risks, generate contracts or detect fraud. Consumer behavior could even be framed as a large multi--agent learning problem. In such environments, insurance cannot be merely actuarial; it must be algorithmic, incentive--aware, and grounded in learning dynamics. Our framework provides both theoretical foundations and practical insights from the industry toward this vision.
\newline
\newline
In summary, this work provides (i) a rigorous model of principal–agent learning in MDPs with endogenous externalities, (ii) welfare and regret guarantees for incentive-compatible mechanisms in dynamic systems, and (iii) a conceptual and mathematical bridge between diffusion models and economic aggregation. Together, these contributions shed light about how AI and economics must be tightly linked to build safe and and welfare--aligned systems for the markets and insurance infrastructures of the future.

\section{Related Work}

Before diving into the model, we review some important works linked with our setting. The study of learning and incentives in multi-agent systems lies at the intersection of contract theory and modern reinforcement-learning approaches. Classical principal–agent theory provides the foundation: beginning with the seminal formulations of hidden-action and hidden-information problems \citep{mirrlees1999theory}, the economic literature characterizes how a principal induces an agent to take costly, unobservable actions by offering outcome-contingent transfers. These models traditionally assume static or small dynamic environments with full knowledge of outcome distributions. Their central contribution is the articulation of incentive-compatibility constraints, participation constraints, and the structure of optimal contracts when types, costs, or actions are not directly observed. Our work inherits this conceptual logic but extends it to environments where both the principal and the agents are learning players, repeatedly interacting together in large state spaces under uncertainty about transition dynamics and reward structures.
\newline
\newline
A first major strand of work extends contract theory into algorithmic and high-dimensional domains. Combinatorial contracts \citep{dutting2022combinatorial} and multi-agent contract design \citep{dutting2023multi} study settings where the principal’s reward depends on combinatorial interactions between multiple agents or many possible effort dimensions. These works develop approximation schemes, impossibility results, and structural characterizations of optimal linear or bounded contracts in complex environments. More recent results show how contract classes can be understood through their pseudo-dimension, yielding sample-complexity guarantees for offline learning of near-optimal contracts from agent-type datasets \cite{dutting2025pseudo}. In the same direction, some works explore the trade-offs between expressiveness and learnability of menus and piecewise-linear contracts. Together, this literature establishes the algorithmic foundations of large-scale contract design.
\newline
\newline
A growing line of research integrates contract theory with online learning. Several works investigate repeated principal–agent interactions under bandit feedback, in which the principal observes only the stochastic realization of outcomes but not the agent’s type or reward function. Online contract-learning frameworks \citep{scheid2024incentivized, liu2025learning} study how a principal can ensure incentive compatibility while simultaneously learning unknown rewards. Similar ideas appear in more complex multi-agent structures in tree-like graphs: \citet{scheid2025online,scheid2025learning} demonstrate that local, one-step transfers suffice to globally steer all players toward the optimal joint action, effectively achieving welfare maximization in fully decentralized systems. These works show that without structural assumptions on agent response, principal regret is necessarily linear, while mild restrictions—such as empirically-greedy or elimination-based behavior—recover sublinear regret. More sophisticated models incorporate strategic learning on the agent side: \citet{liu2024principal} study agents who maintain their own empirical estimates and may explore arbitrarily, proving nearly optimal regret bounds for robust incentivization \citep{liu2024principal}. Complementing these works, \citet{wu2025learning} introduce a general \emph{learning to lead} model where the agent may strategically manipulate the principal’s learning by misreporting or inducing misleading observations. These results collectively highlight the delicate interplay between incentive compatibility and statistical learning, a theme central to our paper.
\newline
\newline
Recent work emphasizes delegation of learning tasks and the design of incentives affecting data quality or exploration incentives. A notable direction studies delegated data collection in decentralized or federated environments. \citet{ananthakrishnan2024delegating} show that when the principal relies on strategic agents to collect data that will later be used for training, both hidden actions and hidden states arise naturally, and performance-based contracts can achieve near-optimal delegation despite uncertainty in rewards and data quality \citep{ananthakrishnan2024delegating}. Closely related are incentive-compatibility constraints in exploration: in reinforcement-learning settings where exploration is costly for the agent, \citet{simchowitz2024exploration} demonstrate that standard RL algorithms violate classic incentive compatibility, and that exploration must be orchestrated using controlled information disclosure rather than monetary transfers \citep{simchowitz2024exploration} while \citet{capitaine2024unravelling} study how a principal can orchestrate data collection by agents in the purpose of collaborative learning when such collection is costly to the agents. Earlier principal–agent bandit models take the opposite view: the principal directly pays agents to explore, allowing the principal to learn unknown reward functions \cite{scheid2024incentivized}. Our work follows this line of thought but embeds the interaction inside a Markovian system and allows both sides to learn.
Another major direction concerns incentive problems arising from externalities and coordination failures. For the fixed and fully rational scenario, results from the Coase theorem have existed for decades \citep{coase2013problem, medema2020coase, farrell1987information, deryugina2021environmental}. Previous works have been developed to extend the setup to a game in an unknown environment with learning. In two-agent bandit settings, \citet{scheid2024learning} show that without property rights, welfare-maximizing outcomes may be impossible because agents fail to internalize externalities; surprisingly, appropriate transfer schemes restore an online analogue of the Coase theorem. \citet{zuo2024new} extends these ideas to dynamic environments with learning and uncertainty, emphasizing the importance of online bargaining and stability notions \citep{zuo2024new}. Fairness considerations have also entered the literature: \cite{tluczek2025fair} show that linear contracts can be adapted to satisfy fairness constraints across heterogeneous agents while preserving sublinear regret and high welfare in repeated interactions. Such works illustrate how classical concepts—externality internalization, bargaining, and fairness—must be reinterpreted when agents are learners rather than fully rational optimizers.
\newline
\newline
Separately, recent works connect principal–agent reasoning with reinforcement learning in MDPs and Markov games. \citet{ivanov2024principal} propose principal–agent reinforcement learning, introducing a meta-algorithm that converges to subgame-perfect Nash equilibrium (SPNE) in principal–agent MDPs through alternating optimization over policies. They show that contract-based payments can be interpreted as a form of reward shaping with principled economic meaning, and that deep RL can scale such mechanisms to large MDPs. Extensions to multi-agent Markov games \citep[see, e.g. ][ for a general overview]{littman1994markov, nowe2012game, zhang2021multi, yang2020overview} demonstrate how contract-based interventions can mitigate sequential social dilemmas in environments such as the Coin Game. Complementary, extensions of such setups to mean-field games have been studied \citep{lasry2007mean, bensoussan2013mean, guo2019learning}, where a mediator incentivizes a large population of no-regret agents towards desired equilibria despite model uncertainty \citep{widmer2025steering}. These results highlight the importance of learning-based incentive design in sequential and population-scale environments, foreshadowing the complexity of future AI ecosystems.
\newline
\newline
Finally, these lines of work are closely connected to the literature on experimental design \citep{kirk2009experimental, berger2018experimental, federer1956experimental}, which studies how data should be selected in order to maximize statistical efficiency and downstream decision quality. In classical statistics, experimental design formalizes the trade-off between information acquisition and resource constraints. In modern machine learning, these concerns reemerge in adaptive, sequential, and interactive settings, where data is shaped by the behavior of learning agents and by the incentives embedded in the system. This perspective links incentive design, exploration, and data collection to optimal design principles, which frame learning as an optimization problem over information structures rather than a single estimation task. Optimal design \citep{atkinson2014optimal, goos2016optimal} has appeared to be fundamental as a tool to select which data can be useful for training and inference. In the perspective on reward-model training in RLHF \citep{wang2024secrets, wang2024comprehensive, fu2025reward}, the selection of human-labeled preference pairs can be framed as a pure-exploration bandit problem \citep{zhao2024sharp, scheid2024optimal}. By characterizing simple regret and constructing matching upper and lower bounds, it can be shown how incentives (in this case, allocation of costly human annotation effort) shape the statistical efficiency of reward inference. Principal–agent learning problems can be interpreted as instances of endogenous experimental design, in which mechanisms and transfers determine not only agent behavior but also the statistical efficiency of learning itself—a theme that directly motivates and complements the framework developed in this paper.
\newline
\newline
Overall, these lines of research converge towards a central insight: \emph{as AI systems increasingly consist of interacting agents whose incentives and information are distributed, classical contract theory must be fused with online learning} to develop modern and fair systems. This paper contributes to this synthesis by studying principal–agent interactions in Markovian environments with learning on both sides, demonstrating how incentive design, exploration strategies, and multi-agent coordination interact in dynamic and uncertain settings.

%%%%%%%%%%%%%%%%%%%%%%%%%%%%%%%%%%%%%%%%%%%%%%%%%%%%%%%%%%%%%%%%%%%%%%%%
% BIB
%%%%%%%%%%%%%%%%%%%%%%%%%%%%%%%%%%%%%%%%%%%%%%%%%%%%%%%%%%%%%%%%%%%%%%%%

%%%%%%%%%%%%%%%%%%%%%%%%%%%%%%%%%%%%%%%%%%%%%%%%%%%%%%%%%%%%%%%%%%%%%%%%
% PRINCIPAL-AGENT MDP WITH EXTERNALITIES
%%%%%%%%%%%%%%%%%%%%%%%%%%%%%%%%%%%%%%%%%%%%%%%%%%%%%%%%%%%%%%%%%%%%%%%%

\section{Incentive Design in a Principal Agent MDPs with Externalities}

As AI systems increasingly mediate economic activity, social coordination, and large-scale decision processes, understanding how learning agents interact strategically becomes essential for ensuring that these systems behave safely, efficiently, and fairly.
% The classical view of AI—an isolated system optimizing a fixed objective—no longer captures the realities of contemporary deployments, in which large models, distributed learners, and human stakeholders jointly generate behavior through feedback loops, incentives, and dynamic information flows. In such environments, misaligned incentives can produce harmful emergent effects: under-provision of exploration, exploitation of externalities, distorted data due to strategic manipulation, or persistent inefficiencies arising from feedback instability.
The theoretical framework developed in this work provides a principled foundation for addressing these challenges, showing how economic mechanisms can be integrated into learning systems so that individual agents contribute to globally desirable outcomes. By demonstrating that social welfare can provably be recovered—even when the AI system does not control the environment directly and must infer the preferences and learning dynamics of other agents— we hope that this research opens the door to designing AI platforms that can steer decentralized ecosystems without coercion or unrealistic assumptions about agent rationality.
\newline
\newline
Such results are critical as AI continues to move from laboratory settings into open, complex markets: ride-sharing platforms, generative-model marketplaces, multi-agent simulation environments, collaborative robots are structured around incentives rather than direct control. Understanding how to design transfers, bargaining schemes \citep{muthoo1999bargaining, powell2002bargaining, staahl1973bargaining}, and incentive-compatible protocols allows to predict and regulate how agents behave, reducing risks of exploitation or welfare collapse. Equally importantly, these insights support the development of AI that can reason about incentives, negotiate with humans, and commit to fair and transparent mechanisms that align behavior across diverse stakeholders. As online Coasean results generalize from simple bandits to rich MDP environments, we gain not only new theoretical guarantees but also a conceptual roadmap for building AI systems that combine learning, contracts, and strategic reasoning. This work highlights the necessity of merging economics and online learning at a fundamental level and helps ensure that the next generation of AI technologies can thrive within the multi-agent, incentive-driven world in which they will inevitably operate.
\newline
\newline
We extend the externality and bargaining framework developed in the bandit setting to a MDP in which the agent controls the environment while the principal influences the agent's behavior through transfers. The agent's actions determine both the principal's reward and the transition probabilities, and both players learn over time. As in the online bargaining linked with bandits, the principal seeks to internalize externalities through dynamic transfer policies, but the MDP structure introduces a longer exploration phase and more complex learning dynamics.

%%%%%%%%%%%%%%%%%%%%%%%%%%%%%%%%%%%%%%%%%%%%%%%%%%%%%%%%%%%%%%%%%%%%%%%%
% MODEL
%%%%%%%%%%%%%%%%%%%%%%%%%%%%%%%%%%%%%%%%%%%%%%%%%%%%%%%%%%%%%%%%%%%%%%%%

\subsection{Setting}

The environment is a finite-horizon MDP with a state space $\cS$, $|\cS| = S$, action space $\cA$, $|\cA| = K$. For any states $s,s' \in \cS$, action $a \in \cA$, we have a transition kernel $P(s' \mid s,a)$; agent reward $r_a(s,a) \in [0,1]$ and principal reward $r_p(s,a) \in [0,1]$.
\newline
\newline
Episodes have horizon $H$. At episode $k$ and step $h$, the state is $s_h^k$. First, the principal chooses a transfer vector $\tau_h^k(\cdot) \in \R_+^K$, then the agent takes action $a_h^k \in A$, the agent receives a reward:
\[r_a(s_h^k,a_h^k) + \tau_h^k(a_h^k) \eqsp,\]
while the principal's reward is
\[r_p(s_h^k,a_h^k) - \tau_h^k(a_h^k) \eqsp,\]
where the transfers add up to each of the players' utilities at each round. If one has in mind the setting of an insurer (the principal) and a client (the agent), the incentives would typically be discounts or promotions offered to the client for some advantageous contracts. Finally, the transition is $s_{h+1}^k \sim P(\cdot \mid s_h^k,a_h^k)$.
\newline
\newline
Being rational, the agent maximizes the expected cumulative return
\[
\E\Bigg[ \sum_{k=1}^T \sum_{h=1}^H \bigl(r_a(s_h^k,a_h^k) + \tau_h^k(a_h^k)\bigr) \Bigg] \eqsp,
\]
and the principal maximizes
\[
\mathbb{E}\Bigg[ \sum_{k=1}^T \sum_{h=1}^H \bigl(r_p(s_h^k,a_h^k) - \tau_h^k(a_h^k)\bigr) \Bigg] \eqsp.
\]
Summing the utilities obtained by the players, we define the social welfare as
\[
W = \E\Bigg[ \sum_{k=1}^T \sum_{h=1}^H \bigl(r_a(s_h^k,a_h^k) + r_p(s_h^k,a_h^k)\bigr) \Bigg] \eqsp,
\]
since transfers cancel there. Hence, the transfers are used to shape the players' behaviors but do not account in the global welfare. Their only use is to align the players' utilities.

%%%%%%%%%%%%%%%%%%%%%%%%%%%%%%%%%%%%%%%%%%%%%%%%%%%%%%%%%%%%%%%%%%%%%%%%
% POLICIES AND WELFARE
%%%%%%%%%%%%%%%%%%%%%%%%%%%%%%%%%%%%%%%%%%%%%%%%%%%%%%%%%%%%%%%%%%%%%%%%

\paragraph{Policies and Social Welfare.} A stationary agent policy is a mapping $\pi_a : \cS \to \Delta(\cA)$, and a stationary transfer policy is a mapping $\pi_\tau : \cS \to \R_+^K$.
\newline
\newline
Let $V_a^{\pi_a,\pi_\tau}$ and $V_p^{\pi_a,\pi_\tau}$ denote the value functions for the agent and principal. Social welfare under $(\pi_a,\pi_\tau)$ is
\[
W(\pi_a,\pi_\tau) = V_a^{\pi_a,\pi_\tau} + V_p^{\pi_a,\pi_\tau} \eqsp.
\]
We thus define the optimal global welfare as
\[
W^\star \coloneqq \max_{\pi_a} W(\pi_a,\pi_\tau) \eqsp,
\]
noting that transfers do not affect welfare.

%%%%%%%%%%%%%%%%%%%%%%%%%%%%%%%%%%%%%%%%%%%%%%%%%%%%%%%%%%%%%%%%%%%%%%%%
% AGENT LEARNING
%%%%%%%%%%%%%%%%%%%%%%%%%%%%%%%%%%%%%%%%%%%%%%%%%%%%%%%%%%%%%%%%%%%%%%%%

\subsection{Players' Behaviors}

\paragraph{Agent Learning and Rationality.} Let $\pi_{a,k}$ be the agent's policy in episode $k$, produced by a reinforcement learning algorithm that updates from past episodes. We impose an episodic regret assumption analogous to a form of hindsight-rationality condition.

\begin{definition}[Agent Rationality]
The agent satisfies episodic regret exponent $\kappa \in [0,1)$ if there exists $C>0$ and $\zeta>0$ such that for any transfer sequence $(\pi_{\tau,1},\dots,\pi_{\tau,T})$, with probability at least $1 - T^{-\zeta}$,
\[
\sum_{k=1}^T \bigl( V_a^{\pi_a^\star,\pi_{\tau,k}} - V_a^{\pi_{a,k},\pi_{\tau,k}} \bigr) \le C T^\kappa,
\]
where $\pi_a^\star$ is an optimal stationary policy for the agent (under $r_a$).
\end{definition}

%%%%%%%%%%%%%%%%%%%%%%%%%%%%%%%%%%%%%%%%%%%%%%%%%%%%%%%%%%%%%%%%%%%%%%%%
% PRINCIPALS GOAL AND REGRET
%%%%%%%%%%%%%%%%%%%%%%%%%%%%%%%%%%%%%%%%%%%%%%%%%%%%%%%%%%%%%%%%%%%%%%%%

\paragraph{Principal’s Objective and Welfare Regret.} Now that we most of the setting is defined, we turn our attention to the objectives that the players have. Formally, we define the global welfare as
\[
W_k = V_a^{\pi_{a,k},\pi_{\tau,k}} + V_p^{\pi_{a,k},\pi_{\tau,k}} \eqsp,
\]
and the social welfare regret is
\[
R_{\mathrm{sw}}(T) = T W^\star - \sum_{k=1}^T W_k \eqsp.
\]
The goal is to design $\pi_{\tau,k}$ such that $R_{\mathrm{sw}}(T) = o(T)$. Again thinking to an insurance company powered with AI algorithms, the welfare would be the utility obtained by both the client (happy to be insured, and ready to pay a fare for that) and the insurer (whose aim is to collect revenues).

%%%%%%%%%%%%%%%%%%%%%%%%%%%%%%%%%%%%%%%%%%%%%%%%%%%%%%%%%%%%%%%%%%%%%%%%
% TWO-PHASE ALGORITHM
%%%%%%%%%%%%%%%%%%%%%%%%%%%%%%%%%%%%%%%%%%%%%%%%%%%%%%%%%%%%%%%%%%%%%%%%

\subsection{Principal’s Two-Phase Algorithm}

\paragraph{Phase 1: Transfer Estimation.}  
For each $(s,a)$, the principal seeks the minimal transfer
\[
\tau_s^\star(a) = \max_{a'} \bigl(Q_a(s,a') - Q_a(s,a)\bigr)_+,
\]
where $Q_a$ is the agent’s optimal state–action value function.

Using batched binary search, the principal estimates $\tau_s^\star(a)$ by offering fixed transfers during batches of episodes and observing the fraction of times the agent selects $a$ when in state $s$.

\paragraph{Phase 2: Welfare Optimization.}  
Once the estimates $\hat{\tau}_s(a)$ satisfy
\[
|\hat{\tau}_s(a) - \tau_s^\star(a)| \le T^{-\beta} \eqsp,
\]
the principal can effectively implement any desired action at $s$ by offering $\hat{\tau}_s(a)$. She then runs a no-regret RL algorithm (e.g., UCB-VI) on the shifted MDP with effective rewards 
\[
\tilde r_p(s,a) = r_p(s,a) - \hat{\tau}_s(a),
\]
which preserves welfare. Note that we formulate a simple and theoretical result here, but features can be incorporated while using contextual reinforcement learning algorithms. Pushing things further, we believe that our setting could benefit from Deep RL algorithms.

%%%%%%%%%%%%%%%%%%%%%%%%%%%%%%%%%%%%%%%%%%%%%%%%%%%%%%%%%%%%%%%%%%%%%%%%
% MAIN THEOREM
%%%%%%%%%%%%%%%%%%%%%%%%%%%%%%%%%%%%%%%%%%%%%%%%%%%%%%%%%%%%%%%%%%%%%%%%

\subsection{Main Result}

Now that the method and algorithms are exposed, we provide our main theorem, with the proof given in the Appendix.

\begin{theorem}[Social Efficiency in Principal--Agent MDPs]
\label{thm:main_mdp}
Assume the agent satisfies hindsight rationality with exponent $\kappa < 1$ and that the MDP is uniformly ergodic under exploratory policies, ensuring that each state is visited $\Theta(T^\alpha)$ times per batch for some $\alpha>0$. Suppose the principal chooses exponents $\alpha,\beta \in (0,1)$ satisfying
\[
\kappa < \alpha < 1,
\qquad
\frac{\beta}{\alpha} < 1 - \kappa.
\]
Then there exists a two-phase principal’s algorithm such that with high probability:
\begin{align*}
& R_{\mathrm{sw}}(T)
    = O\left( 
      T^\alpha \operatorname{polylog} (T)
      + T^\gamma \operatorname{polylog} (T)\right.
      \\
      & \quad \left. + T^\kappa \operatorname{polylog} (T)
    \right) \eqsp,
\end{align*}
where $\gamma<1$ is the regret exponent of the principal’s RL algorithm in Phase 2. In particular,
\[
R_{\mathrm{sw}}(T) = o(T) \eqsp,
\]
so the principal achieves asymptotically optimal social welfare.
\end{theorem}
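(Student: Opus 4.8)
The plan is to decompose the social-welfare regret additively across the two phases and bound each contribution separately. Writing $R_{\mathrm{sw}}(T) = \sum_{k=1}^{T}\bigl(W^\star - W_k\bigr)$, I would first split the episodes into the Phase~1 window (the batched binary search for $\tau_s^\star$) and the Phase~2 window (running the no-regret RL algorithm on the shifted MDP). The guiding idea is that Phase~1 is short relative to $T$ and each episode there costs at most $O(H)$ welfare, so its total contribution is controlled by the number of binary-search episodes; meanwhile in Phase~2 the principal can effectively implement her desired action at every state, so the residual regret is governed jointly by the agent's hindsight-rationality slack and the principal's own RL regret exponent $\gamma$.

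First I would quantify the length of Phase~1. Each $(s,a)$ pair requires $O(\log(1/\varepsilon)) = O(\log T^{\beta}) = O(\operatorname{polylog}(T))$ batches to drive the estimation error below the target $T^{-\beta}$, and the ergodicity assumption guarantees each state is visited $\Theta(T^\alpha)$ times per batch; since there are $SK$ pairs, the total number of episodes spent in Phase~1 is $O(T^\alpha \operatorname{polylog}(T))$. Because each episode contributes at most $O(H)$ to the per-episode welfare gap, the Phase~1 contribution to $R_{\mathrm{sw}}(T)$ is $O(T^\alpha \operatorname{polylog}(T))$, matching the first term. Here I would verify carefully that the batched binary search succeeds with the claimed probability, invoking a concentration bound (Hoeffding) on the empirical fraction of times the agent selects $a$ in state $s$ — the condition $\kappa < \alpha$ ensures the agent's per-batch best-response behavior is statistically resolvable against its regret noise, and the condition $\beta/\alpha < 1-\kappa$ ensures the estimation accuracy $T^{-\beta}$ is achievable within the visit budget $T^\alpha$ given the agent's slack $T^\kappa$.

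Second I would analyze Phase~2. Once $|\hat\tau_s(a) - \tau_s^\star(a)| \le T^{-\beta}$, offering $\hat\tau_s(a)$ makes the agent's incentive-adjusted $Q$-values favor the principal's intended action up to an $O(T^{-\beta})$ perturbation, so by the agent-rationality definition the cumulative gap between the realized agent policy and the target policy accumulates two error sources: the hindsight-rationality slack $C T^\kappa$ and the misspecification from the transfer estimate, summed over at most $T$ episodes, giving an $O(T \cdot T^{-\beta}) $ term that is absorbed once one checks $1-\beta < 1$ together with the coupling to $\alpha$. Running UCB-VI on the effective reward $\tilde r_p$ preserves welfare (since transfers cancel in $W$) and contributes its intrinsic $O(T^\gamma \operatorname{polylog}(T))$ regret. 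Combining the two windows yields the three stated terms $T^\alpha$, $T^\gamma$, and $T^\kappa$, each sublinear, so $R_{\mathrm{sw}}(T) = o(T)$.

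The hard part will be the coupling analysis that certifies the transfer estimates remain valid \emph{throughout} Phase~2 and that implementing the shifted reward does not itself disturb the ergodicity or the agent's regret guarantee — that is, showing the agent continues to satisfy its hindsight-rationality bound under the principal's Phase~2 transfer sequence (which is adaptive and data-dependent) rather than a fixed one. I would address this by noting the agent-rationality definition is stated uniformly over \emph{any} transfer sequence, so it applies to the realized $(\pi_{\tau,k})$; the delicate step is propagating the $T^{-\beta}$ estimation error through the value functions via a simulation-lemma-type argument, bounding $|V_a^{\pi_a^\star,\hat\pi_\tau} - V_a^{\pi_a^\star,\pi_\tau^\star}|$ by $O(H T^{-\beta})$ and ensuring this perturbation does not accumulate faster than the allotted budget permits. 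The interplay of the three exponent constraints $\kappa < \alpha$ and $\beta/\alpha < 1-\kappa$ is precisely what makes all error terms simultaneously sublinear, and I would devote the most care to tracking these dependencies explicitly.
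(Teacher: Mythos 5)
Your overall architecture matches the paper's: split the regret into a Phase~1 window (batched binary search, $O(SK\log T)$ batches of length $T^\alpha$, each episode costing $O(H)$ welfare, giving the $T^\alpha\operatorname{polylog}(T)$ term), and a Phase~2 window whose regret is the sum of the principal's own RL regret $T^\gamma\operatorname{polylog}(T)$ and the agent's hindsight-rationality slack $T^\kappa$. Your identification of the key technical points — Hoeffding concentration on the empirical action frequencies, the role of $\kappa<\alpha$ in making the agent's best-response behavior statistically resolvable against its regret noise, and the fact that the rationality assumption is stated uniformly over arbitrary (hence adaptive) transfer sequences — is exactly what the paper's Steps~1, 2 and~4 do.

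There is, however, one genuine gap in your Phase~2 analysis: the treatment of the transfer-estimation error. You allow a two-sided error $|\hat\tau_s(a)-\tau_s^\star(a)|\le T^{-\beta}$ and then charge an $O(T\cdot T^{-\beta})$ misspecification term. This does not work, for two reasons. First, if $\hat\tau_s(a)$ \emph{undershoots} $\tau_s^\star(a)$, the intended action $a$ is strictly suboptimal for the agent (by up to $T^{-\beta}$), so the agent's regret bound places no constraint at all on how often it plays $a$ — it may play a competing action $a'$ in every episode, and the \emph{welfare} gap between $a'$ and $a$ is $O(H)$ per episode, not $O(T^{-\beta})$; your accounting conflates the agent's perceived gap with the social-welfare gap, and the true cost of undershooting is potentially linear in $T$. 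Second, even granting your $T^{1-\beta}$ figure, it is not absorbed by the three stated terms: the constraints only force $\beta<\alpha(1-\kappa)$, so e.g.\ $\alpha=0.5$, $\kappa=0.1$, $\beta=0.4$ gives $T^{0.6}\gg T^{0.5}=T^\alpha$, and the theorem's bound would fail. The paper avoids this entirely by making the binary search \emph{one-sided}: it guarantees $0\le\hat\tau_s(a)-\tau_s^\star(a)\le 2T^{-\beta}$, so that $Q_a(s,a')\le Q_a(s,a)+\tau_s^\star(a)\le Q_a(s,a)+\hat\tau_s(a)$ for all $a'$, the target action is exactly (weakly) optimal under the offered transfer, no misspecification term appears in the welfare regret, and the agent's deviations are charged solely to its $T^\kappa$ slack. (The overpayment of $2T^{-\beta}$ per step is free because transfers cancel in $W$.) You would need to build this one-sidedness into your binary-search invariant to close the argument.
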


%%%%%%%%%%%%%%%%%%%%%%%%%%%%%%%%%%%%%%%%%%%%%%%%%%%%%%%%%%%%%%%%%%%%%%%%
% PROOF
%%%%%%%%%%%%%%%%%%%%%%%%%%%%%%%%%%%%%%%%%%%%%%%%%%%%%%%%%%%%%%%%%%%%%%%%

%%%%%%%%%%%%%%%%%%%%%%%%%%%%%%%%%%%%%%%%%%%%%%%%%%%%%%%%%%%%%%%%%%%%%%%%
% REFERENCES
%%%%%%%%%%%%%%%%%%%%%%%%%%%%%%%%%%%%%%%%%%%%%%%%%%%%%%%%%%%%%%%%%%%%%%%%

%%%%%%%%%%%%%%%%%%%%%%%%%%%%%%%%%%%%%%%%%%%%%%%%%%%%%%%%%%%%%%%%%%%%%%%%
% DIFFUSION MODELS AND ECONOMIC AGGREGATION
%%%%%%%%%%%%%%%%%%%%%%%%%%%%%%%%%%%%%%%%%%%%%%%%%%%%%%%%%%%%%%%%%%%%%%%%

\section{Diffusion Models as Welfare Maximizing Economic Mechanisms}

Finally, we now establish a formal connection between diffusion models and classical economic aggregation principles, which completes this work at the intersection between AI and economic theory.  We show that the denoiser learned by a diffusion model is the unique 
solution to a social planner problem under quadratic welfare, and that 
this same object arises as an equilibrium aggregator in a micro-founded 
principal--agents model. 
Thus diffusion models may be interpreted as economic mechanisms for 
aggregating noisy information about latent states.

%%%%%%%%%%%%%%%%%%%%%%%%%%%%%%%%%%%%%%%%%%%%%%%%%%%%%%%%%%%%%%%%%%%%%%%%
% DIFFUSION PROCESS
%%%%%%%%%%%%%%%%%%%%%%%%%%%%%%%%%%%%%%%%%%%%%%%%%%%%%%%%%%%%%%%%%%%%%%%%

\subsection{Diffusion Preliminaries}

Let $x_0 \in \mathbb{R}^d$ be drawn from an unknown distribution 
$p_{\mathrm{data}}(x_0)$.  
A (variance-preserving) diffusion model defines a forward noising process
\begin{equation}
    x_t = \alpha_t x_0 + \sigma_t \varepsilon, 
    \qquad \varepsilon \sim \mathcal{N}(0,I_d) \eqsp,
\end{equation}
where $t \in [0,1]$, $\alpha_0 = 1$, $\sigma_0 = 0$, 
and $\alpha_1 \approx 0$, $\sigma_1 \approx 1$.
\newline
\newline
Let $\epsilon_\theta(x_t,t)$ be the denoising network trained via the loss
\begin{equation}
    \cL(\theta)
    =
    \E_{t,x_0,\varepsilon}
    \bigl\| \varepsilon 
    - \epsilon_\theta(\alpha_t x_0 + \sigma_t \varepsilon, t)\bigr\|^2 \eqsp.
\end{equation}
It is classical (for instance in denoising score matching) that the unique minimizer is
\begin{equation}
    \epsilon^\star(x_t,t) 
    = \E[\varepsilon \mid x_t] \eqsp.
\end{equation}
Bayes' rule under the linear-Gaussian model yields
\begin{equation}\label{eq:posterior-x0}
    \mathbb{E}[x_0 \mid x_t] 
    = \frac{1}{\alpha_t}
    \bigl( x_t - \sigma_t\,\epsilon^\star(x_t,t)\bigr) \eqsp,
\end{equation}
which has the great advantage of offering a close form expression.

%%%%%%%%%%%%%%%%%%%%%%%%%%%%%%%%%%%%%%%%%%%%%%%%%%%%%%%%%%%%%%%%%%%%%%%%
% SOCIAL PLANNER PROBLEM
%%%%%%%%%%%%%%%%%%%%%%%%%%%%%%%%%%%%%%%%%%%%%%%%%%%%%%%%%%%%%%%%%%%%%%%%

\subsection{A Social Planner Problem}

Consider a planner who observes only $x_t$ and chooses a reconstruction 
$\hat{x}(x_t) \in \mathbb{R}^d$.  
Define welfare as negative squared error:
\begin{equation}
    W(\hat{x}) 
    \coloneqq 
    -\mathbb{E}\bigl[\|x_0 - \hat{x}(x_t)\|^2\bigr] \eqsp.
\end{equation}
The planner solves
\begin{equation}\label{eq:planner-problem}
    \max_{\hat{x}} W(\hat{x})
    \; \Longleftrightarrow \;
    \min_{\hat{x}} 
    \E\|x_0 - \hat{x}(x_t)\|^2 \eqsp.
\end{equation}

\begin{proposition}[Bayesian Denoiser Maximizes Welfare]
\label{prop:planner-ce}
The unique solution to the planner’s problem 
\eqref{eq:planner-problem} is
\begin{equation}
    \hat{x}^\star(x_t) = \mathbb{E}[x_0 \mid x_t].
\end{equation}
\end{proposition}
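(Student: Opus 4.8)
The plan is to prove this via the orthogonality principle for $L^2$: the conditional expectation $m(x_t) \coloneqq \E[x_0 \mid x_t]$ is precisely the orthogonal projection of $x_0$ onto the closed subspace of square-integrable $\sigma(x_t)$-measurable functions, and I will make this concrete through an explicit bias--variance decomposition. Throughout I assume $\E\|x_0\|^2 < \infty$ (automatic here, since $x_0 \sim p_{\mathrm{data}}$ and the forward process $x_t = \alpha_t x_0 + \sigma_t \varepsilon$ is linear-Gaussian), so that $m(x_t)$ is well defined and lies in $L^2$, and I restrict attention to measurable reconstructions $\hat{x}$ with $\E\|\hat{x}(x_t)\|^2 < \infty$ (any $\hat{x}$ failing this makes the objective $+\infty$ and is trivially suboptimal).

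First I would fix an arbitrary admissible $\hat{x}$ and split the error around the conditional mean,
\[
x_0 - \hat{x}(x_t) = \bigl(x_0 - m(x_t)\bigr) + \bigl(m(x_t) - \hat{x}(x_t)\bigr) \eqsp,
\]
then expand the squared norm and take expectations, which yields $\E\|x_0 - m(x_t)\|^2 + \E\|m(x_t) - \hat{x}(x_t)\|^2$ plus a cross term $2\,\E\langle x_0 - m(x_t),\, m(x_t) - \hat{x}(x_t)\rangle$.

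Second, the crux is to show the cross term vanishes. I would condition on $x_t$ and invoke the tower property: because $m(x_t) - \hat{x}(x_t)$ is $\sigma(x_t)$-measurable it factors out of the inner conditional expectation, leaving the factor $\E[x_0 - m(x_t) \mid x_t] = m(x_t) - m(x_t) = 0$ by the defining property of $m$. Hence the cross term is zero and
\[
\E\|x_0 - \hat{x}(x_t)\|^2 = \E\|x_0 - m(x_t)\|^2 + \E\|m(x_t) - \hat{x}(x_t)\|^2 \eqsp.
\]

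Finally, since the second summand is nonnegative, the objective is bounded below by $\E\|x_0 - m(x_t)\|^2$, and this bound is attained by taking $\hat{x} = m$; uniqueness follows because equality forces $\E\|m(x_t) - \hat{x}(x_t)\|^2 = 0$, i.e. $\hat{x}(x_t) = \E[x_0 \mid x_t]$ almost surely. The only genuine subtlety in an otherwise routine argument is the measure-theoretic bookkeeping in the cross term -- justifying the pull-out of the $\sigma(x_t)$-measurable factor and the integrability that licenses splitting the expectation -- together with phrasing uniqueness in the correct almost-sure (rather than pointwise) sense. I would close by combining this identification with the linear-Gaussian posterior \eqref{eq:posterior-x0}, so that the welfare-maximizing reconstruction $\hat{x}^\star$ coincides exactly with the object computed by the trained denoiser $\epsilon^\star(x_t,t) = \E[\varepsilon \mid x_t]$, establishing the claimed equivalence between the social-planner optimum and the diffusion model's Bayesian denoiser.
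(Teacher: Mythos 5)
Your proposal follows essentially the same route as the paper's own proof: the same decomposition of $x_0 - \hat{x}(x_t)$ around the conditional mean, the vanishing cross term via the tower property, and uniqueness from the nonnegativity of $\E\|\mathbb{E}[x_0 \mid x_t] - \hat{x}(x_t)\|^2$. You merely add the (correct and welcome) integrability and almost-sure bookkeeping that the paper leaves implicit, so the argument is correct and matches the paper's approach.
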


\begin{proof}
Fix any measurable $\hat{x}(x_t)$.  
Write
\begin{align*}
& x_0 - \hat{x}(x_t)
=
\bigl(x_0 - \mathbb{E}[x_0 \mid x_t]\bigr)
\\
& \quad + \bigl(\mathbb{E}[x_0 \mid x_t] - \hat{x}(x_t)\bigr).
\end{align*}
Taking squared norms, expanding, and taking expectations:
\begin{align*}
&\E\|x_0 - \hat{x}(x_t)\|^2
= \E\|x_0 - \E[x_0 \mid x_t]\|^2 \\
& \quad + \E\|\E[x_0 \mid x_t] - \hat{x}(x_t)\|^2 \eqsp,
\end{align*}
since the cross-term is zero by the definition of conditional expectation.
The expression is minimized iff 
$\hat{x}(x_t) = \mathbb{E}[x_0 \mid x_t]$ almost surely.
\end{proof}

Using \eqref{eq:posterior-x0}, we obtain the' folowing corolalry.

\begin{corollary}[Diffusion Training = Welfare Maximization]
The minimizer $\epsilon^\star$ of the diffusion loss $\mathcal{L}$ implements 
the welfare-maximizing decision rule $\hat{x}^\star(x_t)$ through the linear 
relationship
\[
    \hat{x}^\star(x_t)
    =
    \frac{1}{\alpha_t}
    \bigl(x_t - \sigma_t \epsilon^\star(x_t,t)\bigr).
\]
Thus solving the diffusion training problem is equivalent to solving 
\eqref{eq:planner-problem}.
\end{corollary}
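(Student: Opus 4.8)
The plan is to chain together the three facts already in hand and then upgrade the coincidence of optima into a genuine equivalence of the two optimization problems. First I would recall from the diffusion preliminaries that the unique minimizer of $\mathcal{L}$ is the posterior mean of the noise, $\epsilon^\star(x_t,t) = \mathbb{E}[\varepsilon \mid x_t]$; this is the standard denoising-score-matching identity, namely that minimizing an expected squared error over measurable functions returns the conditional expectation. Next I would invoke \Cref{prop:planner-ce}, which identifies the planner's optimum as $\hat{x}^\star(x_t) = \mathbb{E}[x_0 \mid x_t]$. Substituting the Bayes identity \eqref{eq:posterior-x0} then rewrites this optimum directly in terms of $\epsilon^\star$, producing exactly the claimed linear relationship.

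To turn ``the optima coincide'' into an equivalence of problems, I would perform an explicit change of variables at each fixed noise level $t$. Given any candidate denoiser $\epsilon_\theta(\cdot,t)$, associate the reconstruction $\hat{x}_\theta(x_t) = \frac{1}{\alpha_t}\bigl(x_t - \sigma_t \epsilon_\theta(x_t,t)\bigr)$. Using the forward relation $x_t = \alpha_t x_0 + \sigma_t \varepsilon$ to write $\varepsilon = \frac{1}{\sigma_t}(x_t - \alpha_t x_0)$, a short computation gives
\[
\varepsilon - \epsilon_\theta(x_t,t) = \frac{\alpha_t}{\sigma_t}\bigl(\hat{x}_\theta(x_t) - x_0\bigr),
\]
so that $\|\varepsilon - \epsilon_\theta(x_t,t)\|^2 = \frac{\alpha_t^2}{\sigma_t^2}\|x_0 - \hat{x}_\theta(x_t)\|^2$. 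Hence, at each $t$, the diffusion integrand equals the planner's squared-error objective multiplied by the strictly positive weight $\alpha_t^2/\sigma_t^2$.

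Finally I would conclude the equivalence. Because the weight is positive and the reconstruction $\hat{x}(x_t)$ may be chosen independently for each $(x_t,t)$, the per-$t$ minimizations decouple, and integrating over $t$ against any positive density leaves the minimizer unchanged. The map $\epsilon \mapsto \hat{x}$ is an affine bijection, with linear part equal to scaling by $-\sigma_t/\alpha_t \neq 0$, so minimizing $\mathcal{L}$ over $\epsilon_\theta$ is equivalent to minimizing the planner's objective over $\hat{x}$; the two unique optima are carried to one another by this bijection, which establishes both the displayed identity and the equivalence claim.

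The main obstacle is purely the boundary behavior of the noise schedule: the change of variables divides by $\sigma_t$, and the definition of $\hat{x}_\theta$ divides by $\alpha_t$, both of which degenerate at the endpoints $t=0$ (where $\sigma_0 = 0$) and $t=1$ (where $\alpha_1 \approx 0$). I would handle this by restricting the argument to $t \in (0,1)$, where $\alpha_t$ and $\sigma_t$ are bounded away from zero and the reweighting $\alpha_t^2/\sigma_t^2$ is finite and positive; since the training expectation places no mass on the degenerate endpoints, the equivalence of minimizers is unaffected.
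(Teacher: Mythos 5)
Your proposal is correct, and it is actually more complete than what the paper does. The paper gives no separate proof of the corollary: it simply says ``Using \eqref{eq:posterior-x0}, we obtain the following corollary,'' i.e.\ it substitutes the Bayes identity $\mathbb{E}[x_0\mid x_t]=\frac{1}{\alpha_t}(x_t-\sigma_t\epsilon^\star(x_t,t))$ into the conclusion of Proposition~\ref{prop:planner-ce} and asserts the equivalence of the two training problems without further argument. Your first paragraph reproduces exactly that chain (score-matching identity $\epsilon^\star=\mathbb{E}[\varepsilon\mid x_t]$, planner optimum $\hat{x}^\star=\mathbb{E}[x_0\mid x_t]$, Bayes identity linking them), so on that portion you match the paper. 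Where you go further is the change-of-variables computation $\varepsilon-\epsilon_\theta(x_t,t)=\frac{\alpha_t}{\sigma_t}\bigl(\hat{x}_\theta(x_t)-x_0\bigr)$, which shows the diffusion integrand is the planner's squared error reweighted by the positive factor $\alpha_t^2/\sigma_t^2$, and the observation that $\epsilon\mapsto\hat{x}$ is an affine bijection at each $t\in(0,1)$. This is the right way to substantiate the word ``equivalent'' in the statement: the paper only establishes that the two \emph{optima} correspond under the linear map, whereas you establish that the two \emph{objectives} are related by a positive pointwise reweighting and a bijective reparametrization, so that the full optimization landscapes correspond, not just the minimizers. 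Your handling of the degenerate endpoints $t=0$ and $t=1$ is also a legitimate point the paper silently ignores. The one caveat worth recording is that the reweighting means the two losses are equal only up to the $t$-dependent factor $\alpha_t^2/\sigma_t^2$; since the per-$t$ minimizations decouple over measurable functions of $(x_t,t)$, this does not change the minimizer, as you correctly note, but it does mean the equivalence is of argmins rather than of objective values.
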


%%%%%%%%%%%%%%%%%%%%%%%%%%%%%%%%%%%%%%%%%%%%%%%%%%%%%%%%%%%%%%%%%%%%%%%%
% MICRO-FOUNDED INTERPRETATION
%%%%%%%%%%%%%%%%%%%%%%%%%%%%%%%%%%%%%%%%%%%%%%%%%%%%%%%%%%%%%%%%%%%%%%%%

\subsection{A Micro-Founded Economic Interpretation}

We model a continuum of agents indexed by $i\in[0,1]$.
There is a hidden state $x_0 \in \mathbb{R}^d$.
Each agent observes a signal
\begin{equation}
    y_i = x_0 + \eta_i,
    \qquad \eta_i \sim \mathcal{N}(0,\sigma^2 I),
\end{equation}
independently across $i$. A principal chooses a public decision $d\in\R^d$.
Each agent has utility
\[
u_i(d,x_0) = -\|d - x_0\|^2 \eqsp,
\]
and social welfare is
\[
W(d,x_0) = \int_0^1 u_i(d,x_0)\,di = -\|d - x_0\|^2 \eqsp.
\]
Consider a direct mechanism where each agent reports $m_i$ and the 
principal uses an outcome rule
\[
    d = g(m_{[0,1]})
    = \phi\!\left(\int_0^1 m_i\, di\right).
\]

\begin{lemma}[Truth-Telling in Large Economies]
\label{lemma:truth}
If the principal uses a linear rule 
$d = A \int_0^1 m_i\, di$, 
then with a continuum of agents, truth-telling 
$m_i = y_i$ is a Bayesian Nash equilibrium.
\end{lemma}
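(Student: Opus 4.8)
The plan is to exploit the defining feature of a continuum economy: a single agent is negligible, so no unilateral deviation can move the aggregate report, and hence the outcome is unaffected. First I would fix an agent $i$ and suppose every other agent $j \neq i$ reports truthfully, $m_j = y_j$. A pure strategy for agent $i$ is a measurable map $\mu_i : \mathbb{R}^d \to \mathbb{R}^d$ sending the private signal $y_i$ to a report $m_i = \mu_i(y_i)$, with truth-telling corresponding to $\mu_i = \mathrm{id}$. Because the outcome rule is linear, it depends on the profile only through the aggregate $\bar m = \int_0^1 m_j \, dj$, so $d = A \bar m$.

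The key step is to observe that altering the single coordinate $m_i$ leaves the Lebesgue integral $\bar m$ unchanged, since $\{i\}$ has measure zero. Concretely, if agent $i$ deviates to some $m_i' \neq y_i$ while all others remain truthful, then
\[
\int_0^1 m_j \, dj = \int_{[0,1] \setminus \{i\}} y_j \, dj = \int_0^1 y_j \, dj,
\]
so the induced decision $d = A\bar m$ is identical to the one under full truth-telling. Since agent $i$'s payoff $u_i(d,x_0) = -\|d - x_0\|^2$ depends on the report profile \emph{only} through $d$, and $d$ is invariant to $i$'s own report, the conditional expected utility $\mathbb{E}\bigl[u_i(d,x_0)\mid y_i\bigr]$ is constant in $\mu_i$. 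Hence truth-telling is a best response — indeed every report is — and the profile $m_i = y_i$ for all $i$ is a (weak) Bayesian Nash equilibrium, as typical of large-economy mechanisms. To connect with the welfare story, I would additionally invoke the exact law of large numbers: under truth-telling $\bar m = \int_0^1 (x_0 + \eta_i)\, di = x_0$ almost surely, so the choice $A = I$ yields $d = x_0$, the first-best decision maximizing $W(d,x_0)$, in agreement with \Cref{prop:planner-ce}.

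The hard part will not be the incentive argument, which is immediate once negligibility is granted, but the measure-theoretic foundation of a ``continuum of independent signals.'' The naive process $i \mapsto \eta_i$ is generically not jointly measurable in $(i,\omega)$, so the integral $\int_0^1 \eta_i \, di$ and the exact law of large numbers $\int_0^1 \eta_i \, di = \mathbb{E}[\eta] = 0$ both require justification. I would resolve this in one of two standard ways: either (i) invoke a Fubini extension in the sense of Sun, under which a continuum of i.i.d.\ variables admits a jointly measurable version and the exact law of large numbers holds almost surely; or (ii) bypass the continuum by working with a sequence of finite economies of $n$ agents, where a single unilateral deviation shifts the aggregate by $O(1/n)$ and therefore perturbs the outcome and each agent's payoff by $O(1/n)$, then let $n \to \infty$ so that the influence of any one report — and with it any incentive to misreport — vanishes in the limit.
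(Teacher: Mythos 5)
Your proposal is correct and follows essentially the same argument as the paper: a unilateral deviation lies on a measure-zero set, so the aggregate $\int_0^1 m_j\,dj$ and hence the decision $d$ are unchanged, making every agent indifferent and truth-telling a (weak) Bayesian Nash equilibrium. Your additional remarks on the measurability of a continuum of independent signals and the Fubini-extension or finite-economy workarounds go beyond the paper's one-line justification but do not change the substance of the argument.
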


\begin{proof}
With a continuum of agents, any individual agent has negligible influence 
on $\int_0^1 m_i\,di$.  
Thus each agent treats the outcome as fixed. The report does not change $d$, so the agent is indifferent among reporting 
strategies; truthful reporting is therefore an equilibrium (whenever one assumes a favorable tie-breaking, which is a very common assumption in such settings).
\end{proof}

Now assume that the principal observes a noisy macro signal generated as 
\begin{equation}
    x_t = \alpha_t x_0 + \sigma_t \varepsilon \eqsp,
\end{equation}
as in the diffusion forward process.  
The principal's welfare problem is again 
$\max_d -\E\|x_0 - d\|^2$.

Combining Proposition \ref{prop:planner-ce} with 
Lemma \ref{lemma:truth} finally leads to the following result.

\begin{proposition}[Diffusion Drift as Welfare-Maximizing Equilibrium]
Under quadratic utilities, symmetric priors, and truth-telling 
(Lemma~\ref{lemma:truth}), the mechanism that maximizes expected welfare 
given noisy macro signal $x_t$ implements
\[
d^\star(x_t) = \mathbb{E}[x_0 \mid x_t],
\]
which corresponds exactly to the diffusion model's optimal denoiser via
\[
d^\star(x_t) 
= \frac{1}{\alpha_t}\bigl(x_t - \sigma_t \epsilon^\star(x_t,t)\bigr).
\]
Thus the reverse-diffusion update direction is the welfare-maximizing
aggregator of noisy private information in a large economy.
\end{proposition}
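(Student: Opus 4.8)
The plan is to assemble the three ingredients already in hand — the welfare characterization of the Bayesian denoiser (Proposition~\ref{prop:planner-ce}), the incentive-compatibility of truthful reporting (Lemma~\ref{lemma:truth}), and the closed-form posterior identity \eqref{eq:posterior-x0} — into a single chain of equalities. First I would fix the principal's decision to be an arbitrary measurable function $d(x_t)$ of the observed macro signal and write her objective as $\max_d -\E\|x_0 - d(x_t)\|^2$, where the expectation is taken over the joint law of $(x_0,x_t)$ induced by the prior on $x_0$ and the forward relation $x_t = \alpha_t x_0 + \sigma_t \varepsilon$. The structural point is that this is \emph{verbatim} the planner's problem \eqref{eq:planner-problem} with the reconstruction $\hat{x}$ renamed to $d$; the micro-founded continuum model serves only to certify that the principal may legitimately treat the aggregated information as honest.

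Second, I would invoke Lemma~\ref{lemma:truth}: under the linear outcome rule and a continuum of negligible agents, truth-telling $m_i = y_i$ is a Bayesian Nash equilibrium, so no strategic distortion of the macro signal occurs and the principal's conditioning information is exactly that generated by $x_t$. With truthful reporting fixed, the welfare objective depends on the mechanism only through $d(x_t)$, and Proposition~\ref{prop:planner-ce} applies directly: by orthogonality of the conditional-expectation residual, the unique minimizer of the mean-squared error is $d^\star(x_t) = \E[x_0 \mid x_t]$, which establishes the first displayed equality of the statement.

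Third, I would derive the denoiser identity. Taking conditional expectations of the forward relation gives $x_t = \alpha_t\,\E[x_0 \mid x_t] + \sigma_t\,\E[\varepsilon \mid x_t]$ by linearity of conditional expectation, and since $\epsilon^\star(x_t,t) = \E[\varepsilon \mid x_t]$ is the unique minimizer of the diffusion loss, rearranging yields $\E[x_0 \mid x_t] = \alpha_t^{-1}\bigl(x_t - \sigma_t\,\epsilon^\star(x_t,t)\bigr)$, which is exactly \eqref{eq:posterior-x0}. Substituting this into $d^\star(x_t) = \E[x_0\mid x_t]$ delivers the claimed coincidence between the welfare-maximizing aggregator and the reverse-diffusion drift direction.

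The main obstacle is conceptual rather than computational: I must reconcile the exact-revelation feature of the continuum with the genuinely noisy signal $x_t$ on which the decision is conditioned. Indeed, under truth-telling the raw aggregate satisfies $\int_0^1 y_i\,di = x_0 + \int_0^1 \eta_i\,di = x_0$ by the continuum law of large numbers, so the full report profile would pin down $x_0$ with no residual uncertainty. The resolution I would make explicit is that the decision rule is constrained to be a function of the coarse public statistic $x_t$ alone — motivated by communication, aggregation, or privacy limits that prevent the principal from using the entire profile of private reports. Under that information restriction the residual $\E\|x_0 - \E[x_0\mid x_t]\|^2$ is strictly positive and the posterior mean is the correct sufficient statistic, so the welfare problem is nondegenerate and the above composition goes through. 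Spelling out why the principal commits to the coarse signal is the one place where the argument requires justification beyond mechanically chaining the earlier results.
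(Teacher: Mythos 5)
Your proposal is correct and follows exactly the route the paper intends: the paper gives no standalone proof of this proposition, stating only that it follows by ``combining Proposition~\ref{prop:planner-ce} with Lemma~\ref{lemma:truth}'' together with the posterior identity \eqref{eq:posterior-x0}, which is precisely the chain you assemble. Your final paragraph goes beyond the paper by correctly noting that under truth-telling the continuum aggregate $\int_0^1 y_i\,di$ reveals $x_0$ exactly, so the restriction of the principal's decision to the coarse signal $x_t$ must be imposed as an informational constraint for the problem to be nondegenerate --- a subtlety the paper leaves entirely implicit and that is worth making explicit.
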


This section establishes an equivalence between diffusion models and 
economic aggregation: the score or denoiser learned by a diffusion model is characterized by
both (i) a \emph{social planner optimum} under quadratic welfare and 
(ii) an \emph{equilibrium mechanism} in a large-agent economy.  
This provides a principled economic interpretation of diffusion models 
as devices for aggregating dispersed, noisy information about latent 
states, and connects them naturally to incentive design in multi-agent 
learning systems. Although our results linking diffusion models and large-scale principal--agent economics are preliminary, we hope that they offer insights for future research at this intersection

\section{Simulations}

To conclude, we illustrate the role of incentives in a simple principal–agent Markov decision process with a stateful externality. The environment is a finite-horizon line-world in which an agent chooses among three actions: a fast action that advances the agent quickly but generates significant pollution, a slow action with moderate emissions, and a detour action that is costly to the agent but reduces accumulated pollution. Pollution is an explicit state variable that evolves over time and negatively affects the principal’s reward both per step and at the terminal state. The agent, by contrast, values only reaching the goal quickly and does not directly internalize pollution costs. Social welfare is defined as the sum of agent and principal rewards, with monetary transfers canceling out. This setting is a classic illustration of misaligned utilities and distinct roles between a principal and an agent. We show want to show that the incentives allow the players to align their utilities in a favorable way in order to recover global welfare.
\newline
\newline
We compare two settings. In the first, no transfers are offered and the agent learns via Q-learning using only its own rewards. In the second, the principal offers a simple, state-independent subsidy for taking the detour action. This subsidy is calibrated to offset the agent’s private cost of pollution abatement but does not depend on the state or the history of play. In both cases, the agent follows an $\epsilon$-greedy tabular Q-learning algorithm, and performance is evaluated over long-run averages of social welfare and terminal pollution levels.
\newline
\newline
The results, shown in Figure 1 and Figure 2, demonstrate a clear qualitative difference between the two regimes. Without transfers, the agent overwhelmingly favors the fast action, leading to persistent accumulation of pollution and low social welfare. Introducing a simple subsidy substantially alters the agent’s learned behavior: the agent increasingly selects the detour action early in episodes, reducing pollution accumulation over time. As a consequence, average social welfare increases significantly, while the average end-of-episode pollution level decreases. Importantly, these improvements arise despite the subsidy being simply designed.
\newline
\newline
This experiment illustrates a core message of the paper: in multi-agent learning environments with stateful externalities, selfish reinforcement learning can converge to systematically inefficient outcomes, and incentive schemes—even very simple ones—are necessary to align individual learning behavior with social welfare. While the subsidy mechanism used here is intentionally minimal, the observed gains motivate the more structured incentive-compatible mechanisms studied theoretically in the preceding sections.

\begin{figure}[t]
  \centering
  \includegraphics[width=0.48\textwidth]{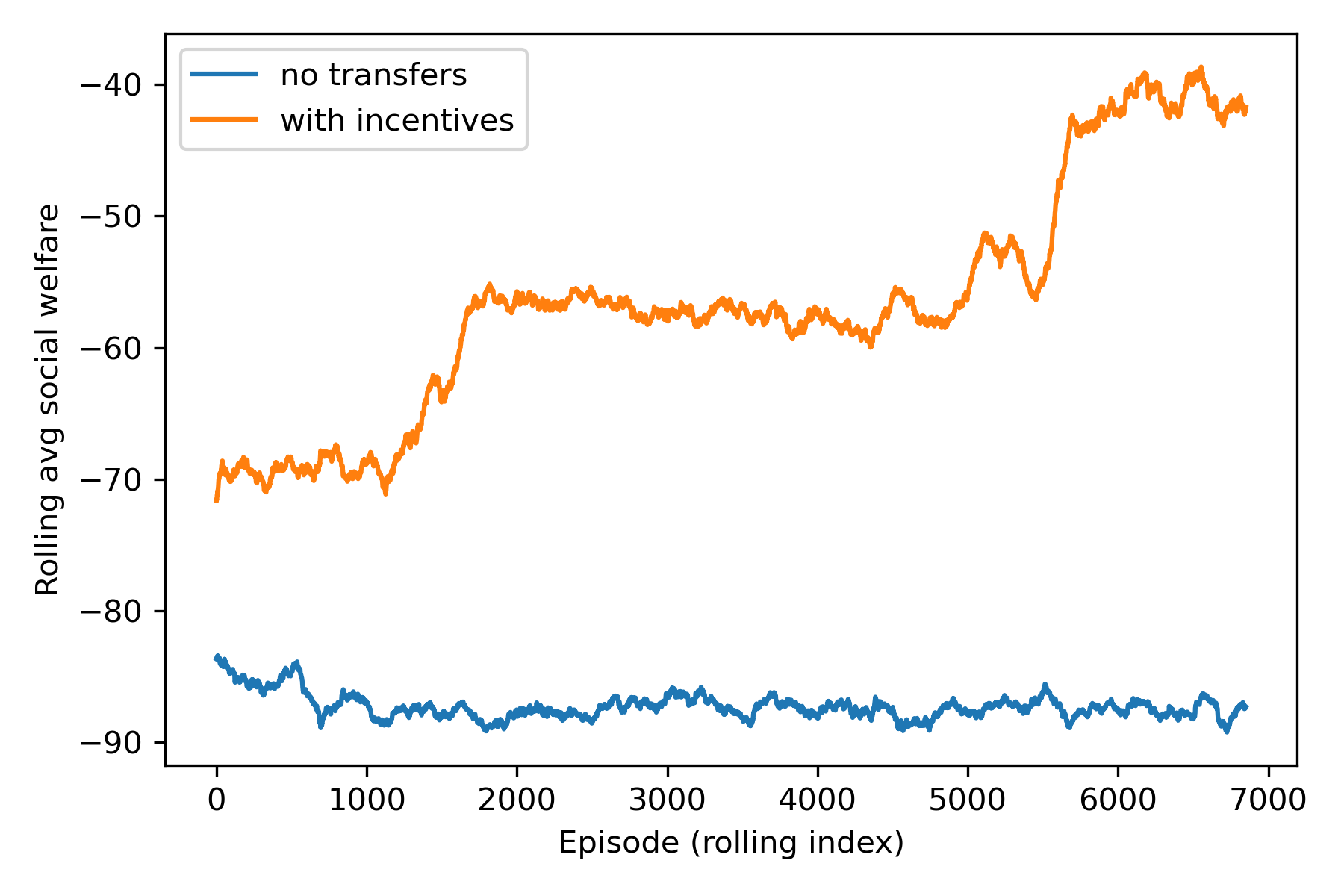}
  \hfill
  \includegraphics[width=0.48\textwidth]{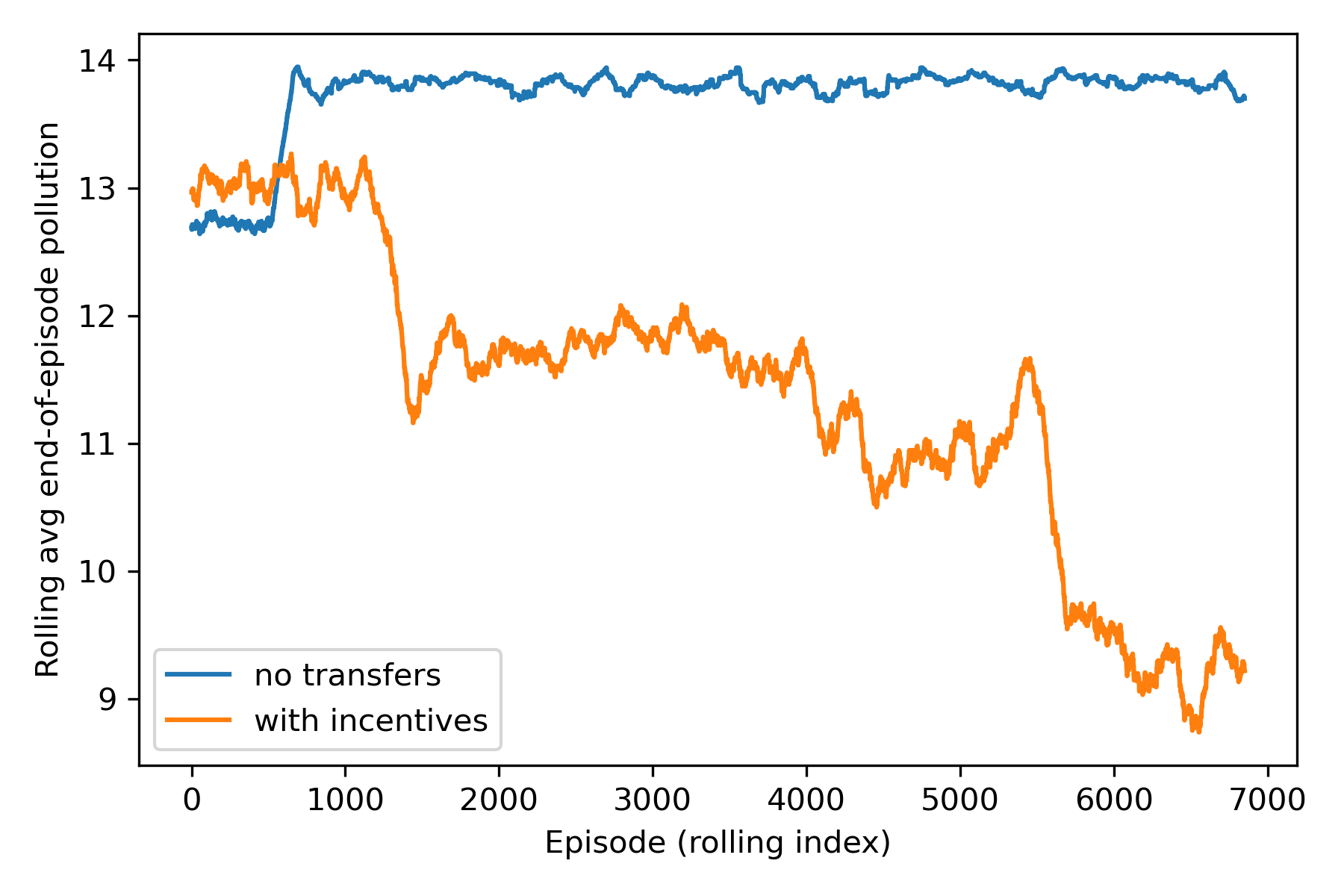}
  \caption{Effect of incentives in a principal–agent MDP with a stateful externality.
  Above: rolling average social welfare. Below: rolling average terminal pollution. Introducing a simple subsidy significantly improves welfare by inducing pollution
  abatement.}
  \label{fig:incentives}
\end{figure}

%%%%%%%%%%%%%%%%%%%%%%%%%%%%%%%%%%%%%%%%%%%%%%%%%%%%%%%%%%%%%%%%%%%%%%%%
% DISCUSSION
%%%%%%%%%%%%%%%%%%%%%%%%%%%%%%%%%%%%%%%%%%%%%%%%%%%%%%%%%%%%%%%%%%%%%%%%

\section{Conclusion}

We argue that the next generation of AI systems should be studied—and ultimately engineered—as economic mechanisms. When AI mediates prediction, contracting, pricing, and enforcement inside markets and insurance infrastructures, the classical separation between “learning from data” and “designing incentives” collapses. Data become endogenous, behavior responds to policies, and optimization unfolds within a coupled system of interacting learners. Our first contribution is a formal principal–agent framework in a Markov decision process where both players learn over time and where agent actions jointly influence rewards and state transitions. Within this model, we show that transfers—while neutral to welfare ex post—are powerful instruments for welfare alignment ex ante: they reshape the agent’s learning problem so that private incentives internalize externalities. The resulting two-phase mechanism provides a clean conceptual template. In Phase~1, the principal identifies the minimal transfers needed to implement desired actions; in Phase~2, the principal leverages these estimates to effectively steer the long-run dynamics of the system. Under mild regularity conditions, this approach achieves sublinear social-welfare regret. Our second contribution is a conceptual and mathematical bridge between economic aggregation and modern generative modeling.
\newline
\newline
Finally, our simulations illustrate the central message in a transparent environment. Overall, the paper contributes to an emerging view of AI deployment: designing safe and welfare-aligned systems in strategic environments requires co-designing learning dynamics and economic mechanisms.

\newpage
\onecolumn

\bibliographystyle{plainnat}   % or alpha, ieee, apalike, etc.
\bibliography{sample} 

%--------------------------------
% Bibliography file
%--------------------------------

\section*{Appendix}

\subsection{Proof of Theorem~\ref{thm:main_mdp}}

\paragraph{Step 1: Action Identifiability via Batches.}
Fix $(s,a)$. In a batch of length $L = T^\alpha$, assume the process visits state $s$ at least $\Omega(T^\alpha)$ times (uniform ergodicity). If the offered transfer $\tau$ satisfies $\tau > \tau_s^\star(a) + T^{-\beta}$, then under the agent’s hindsight rationality condition, choosing any $a' \ne a$ in state $s$ incurs regret at least $\Omega(T^\alpha)$, contradicting the regret bound unless misplays occur on at most $O(T^\kappa)$ of the visits. Thus the agent plays $a$ with frequency $1 - O(T^{\kappa-\alpha})$.

Conversely, if $\tau < \tau_s^\star(a) - T^{-\beta}$, then $a$ is suboptimal by at least $T^{-\beta}$, and the agent will choose $a$ at most $O(T^{\kappa-\alpha})$ times. Since $\alpha > \kappa$, these regimes are statistically distinguishable.

\paragraph{Step 2: Batched Binary Search.}
Repeating this test over $O(\log T)$ batches and shrinking the interval for $\tau_s^\star(a)$ by half each time yields an estimate $\hat\tau_s(a)$ with error at most $T^{-\beta}$, provided $\beta/\alpha < 1-\kappa$ to ensure misclassification probability is $o(1)$.

A union bound across all $s,a$ shows that all estimates satisfy
\[
0 \le \hat\tau_s(a) - \tau_s^\star(a) \le 2T^{-\beta} \eqsp,
\]
simultaneously with high probability.

\paragraph{Step 3: Implementability of Desired Actions.}
For any $a'$, we have
\[
Q_a(s,a') \le Q_a(s,a) + \tau_s^\star(a) \le Q_a(s,a) + \hat\tau_s(a) \eqsp,
\]
so $a$ is optimal for the agent whenever the principal offers $\hat\tau_s(a)$. By the regret bound, the agent deviates from $a$ only $o(T)$ times in total during Phase~2.

\paragraph{Step 4: Principal’s RL and Welfare Regret.}
In Phase~2, the principal effectively controls the MDP and faces regret $O(T^\gamma \mathrm{polylog} T)$. Phase~1 contributes at most $O(T^\alpha \mathrm{polylog} T)$ regret, and deviations by the agent contribute $O(T^\kappa \mathrm{polylog} T)$. Thus
\[
R_{\mathrm{sw}}(T)
    = O \left( 
      T^\alpha \operatorname{polylog} T
      + T^\gamma \operatorname{polylog} T
      + T^\kappa \operatorname{polylog} T
    \right) \eqsp,
\]
which is $o(T)$ since $\alpha,\gamma,\kappa<1$. This establishes the theorem. \hfill$\Box$

\end{document}